\documentclass[letterpaper, 10 pt, conference]{ieeeconf}

\IEEEoverridecommandlockouts
\usepackage[utf8]{inputenc}
\usepackage[T1]{fontenc}

\usepackage{pgfplots}
\usepackage{pgfplotstable}

\usepackage{amsthm}
\usepackage{amsmath}
\usepackage{amssymb}
\usepackage{amsfonts}
\usepackage{tikz}
\usepackage{xcolor}
\usepackage{dsfont}
\usepackage{subcaption}
\usepackage{algorithm}
\usepackage[noend]{algpseudocode}
\usepackage{mathtools}
\usepackage{booktabs}

\usepgfplotslibrary{fillbetween}
\usepgfplotslibrary{groupplots}
\pgfplotsset{compat=1.18}

\usetikzlibrary{positioning, arrows.meta}
\usetikzlibrary{calc}
\usetikzlibrary{fit}
\usetikzlibrary{patterns}

\definecolor{oracle_blue}{HTML}{0072B2}   %
\definecolor{unknown_skyblue}{HTML}{56B4E9} %
\definecolor{tcirl_green}{HTML}{009E73}   %
\definecolor{dqn_orange}{HTML}{D55E00}    %
\definecolor{dqn_amber}{HTML}{E69F00}     %
\definecolor{unknown_red}{HTML}{CC3333}
\definecolor{gap_cause}{HTML}{1F77B4}
\definecolor{gap_ind}{HTML}{FF7F0E}
\definecolor{gap_inv}{HTML}{9467BD}

\newcommand{\defineSeriesStyle}[4]{%
  \expandafter\def\csname seriesColor@#1\endcsname{#2}%
  \pgfplotsset{%
    #1/line/.style={color=#2, thick, #3, no marks},
    #1/mark/.style={color=#2, thick, solid, #4},
    #1/fill/.style={fill=#2, fill opacity=0.15, draw=none},
  }%
}

\newcommand{\seriesColor}[1]{\csname seriesColor@#1\endcsname}

\defineSeriesStyle{oracle}{oracle_blue}{dashed}{mark=*, mark size=1.2pt, mark repeat=25}
\defineSeriesStyle{unknown}{unknown_skyblue}{dotted}{mark=triangle*, mark size=1.5pt, mark repeat=25}
\defineSeriesStyle{tcirl}{tcirl_green}{solid}{mark=square*, mark size=1.2pt, mark repeat=25}
\defineSeriesStyle{dqn}{dqn_orange}{densely dashdotted}{mark=diamond*, mark size=1.5pt, mark repeat=25}
\defineSeriesStyle{dqn4}{dqn_orange}{densely dashdotted}{mark=diamond*, mark size=1.5pt, mark repeat=25}
\defineSeriesStyle{dqn3}{dqn_amber}{dashdotted}{mark=pentagon*, mark size=1.5pt, mark repeat=25}
\defineSeriesStyle{dqncks}{dqn_amber}{dashdotted}{mark=pentagon*, mark size=1.5pt, mark repeat=25}
\defineSeriesStyle{gapcause}{gap_cause}{solid}{mark=*, mark size=1.2pt, mark repeat=25}
\defineSeriesStyle{gapind}{gap_ind}{dotted}{mark=triangle*, mark size=1.5pt, mark repeat=25}
\defineSeriesStyle{gapinv}{gap_inv}{dashed}{mark=square*, mark size=1.2pt, mark repeat=25}

\newcommand{\addTrajSeries}[4]{%
  \addplot[draw=none, forget plot, name path=t#1lo] table [x=timestep, y=ci_lo] {#2};
  \addplot[draw=none, forget plot, name path=t#1hi] table [x=timestep, y=ci_hi] {#2};
  \addplot[#1/#3] table [x=timestep, y=iqm] {#2};
  \addlegendentry{#4}
  \addplot[#1/fill, forget plot] fill between[of=t#1lo and t#1hi];
}

\newcommand{\addTrainSeries}[4]{%
  \addplot[draw=none, forget plot, name path=m#1lo] table [x=step, y=ci_lo] {#2};
  \addplot[draw=none, forget plot, name path=m#1hi] table [x=step, y=ci_hi] {#2};
  \addplot[#1/#3] table [x=step, y=iqm] {#2};
  \addlegendentry{#4}
  \addplot[#1/fill, forget plot] fill between[of=m#1lo and m#1hi];
}

\newcommand{\createTrainingPlot}[9]{%
  \pgfplotstableread[col sep=comma]{#1}\datatableA
  \pgfplotstableread[col sep=comma]{#2}\datatableB
  \pgfplotstableread[col sep=comma]{#3}\datatableC
  \pgfplotstableread[col sep=comma]{#4}\datatableD
  \pgfplotstableread[col sep=comma]{#5}\datatableE
  \begin{tikzpicture}[trim axis left, trim axis right]
    \begin{axis}[
        xlabel={Training Steps},
        ylabel={#6},
        grid=major,
        width=\columnwidth,
        height=0.55\columnwidth,
        tick label style={font=\small},
        label style={font=\small},
        ymin=#7,
        ymax=#8,
        scaled x ticks=base 10:-3,
        xtick scale label code/.code={},
        xticklabel={\pgfmathprintnumber[fixed,precision=0]{\tick}k},
        legend style={
            at={(0.5,1.05)},
            anchor=south,
            font=\scriptsize,
            legend columns=3,
            legend cell align=left,
            inner sep=1pt,
            column sep=4pt,
          },
      ]
      \addTrainSeries{tcirl}{\datatableC}{line}{TCIRL (ours)}
      \addTrainSeries{dqn4}{\datatableD}{line}{DQN (4)}
      \addTrainSeries{dqn3}{\datatableE}{line}{DQN (3)}
      \addTrainSeries{unknown}{\datatableB}{line}{Q-learning}
      \addTrainSeries{oracle}{\datatableA}{line}{Q-learning (cause known)}
    \end{axis}
  \end{tikzpicture}
}

\newcommand{\createTrajectoryPlot}[8]{%
  \pgfplotstableread[col sep=comma]{#1}\trajdataA
  \pgfplotstableread[col sep=comma]{#2}\trajdataB
  \pgfplotstableread[col sep=comma]{#3}\trajdataC
  \pgfplotstableread[col sep=comma]{#4}\trajdataD
  \pgfplotstableread[col sep=comma]{#5}\trajdataE
  \begin{tikzpicture}[trim axis left, trim axis right]
    \begin{axis}[
        xlabel={Time in Eval Episode},
        ylabel={#6},
        grid=major,
        width=\columnwidth,
        height=0.55\columnwidth,
        tick label style={font=\small},
        label style={font=\small},
        ymin=#7,
        ymax=#8,
        legend style={
            at={(0.5,1.05)},
            anchor=south,
            font=\scriptsize,
            legend columns=3,
            legend cell align=left,
            inner sep=1pt,
            column sep=4pt,
          },
        every axis legend/.code={\let\addlegendentry\relax}
      ]
      \addTrajSeries{tcirl}{\trajdataC}{line}{TCIRL (ours)}
      \addTrajSeries{dqn4}{\trajdataD}{line}{DQN (4)}
      \addTrajSeries{dqn3}{\trajdataE}{line}{DQN (3)}
      \addTrajSeries{unknown}{\trajdataB}{line}{Q-learning}
      \addTrajSeries{oracle}{\trajdataA}{line}{Q-learning (cause known)}
    \end{axis}
  \end{tikzpicture}
}

\newcommand{\createAcceptGroupPlot}[6]{%
    \pgfplotstableread[col sep=comma]{#1}\acceptdataL
    \pgfplotstablegetelem{0}{oracle}\of\acceptdataL  \edef\aLoracle{\pgfplotsretval}
    \pgfplotstablegetelem{0}{unknown}\of\acceptdataL  \edef\aLunknown{\pgfplotsretval}
    \pgfplotstablegetelem{0}{tcirl}\of\acceptdataL    \edef\aLtcirl{\pgfplotsretval}
    \pgfplotstablegetelem{0}{dqn4}\of\acceptdataL      \edef\aLdqnfour{\pgfplotsretval}
    \pgfplotstablegetelem{0}{dqn3}\of\acceptdataL      \edef\aLdqnthree{\pgfplotsretval}
    \pgfplotstableread[col sep=comma]{#3}\acceptdataR
    \pgfplotstablegetelem{0}{oracle}\of\acceptdataR   \edef\aRoracle{\pgfplotsretval}
    \pgfplotstablegetelem{0}{unknown}\of\acceptdataR   \edef\aRunknown{\pgfplotsretval}
    \pgfplotstablegetelem{0}{tcirl}\of\acceptdataR     \edef\aRtcirl{\pgfplotsretval}
    \pgfplotstablegetelem{0}{dqn4}\of\acceptdataR       \edef\aRdqnfour{\pgfplotsretval}
    \pgfplotstablegetelem{0}{dqn3}\of\acceptdataR       \edef\aRdqnthree{\pgfplotsretval}
    \begin{tikzpicture}
        \begin{groupplot}[
            group style={
                group size=2 by 1,
                horizontal sep=0pt,
                ylabels at=edge left,
            },
            ybar,
            /pgf/bar width=14pt,
            width=0.55\columnwidth,
            height=0.5\columnwidth,
            ymin=0,
            symbolic x coords={tcirl,unknown,oracle,dqn4,dqn3},
            xtick=data,
            xticklabels={},
            ytick={0,50,100},
            yticklabel={\pgfmathprintnumber{\tick}\%},
            tick label style={font=\scriptsize},
            label style={font=\scriptsize},
            nodes near coords={\pgfmathprintnumber[fixed,precision=1]{\pgfplotspointmeta}},
            every node near coord/.append style={font=\scriptsize\bfseries, anchor=south},
            enlarge x limits=0.25,
            clip=false,
            ymax=#6,
        ]
        \nextgroupplot[
            ylabel={#5},
            ylabel style={font=\scriptsize},
            title={#2},
            title style={at={(0.5,0)}, anchor=north, yshift=-8pt, font=\scriptsize},
            legend to name=acceptlegend,
            legend style={
                draw=none,
                font=\scriptsize,
                legend columns=-1,
                legend cell align=left,
                inner sep=1pt,
                column sep=2pt,
            },
        ]
            \addplot[fill=tcirl_green, draw=tcirl_green, bar shift=0pt] coordinates {(tcirl, \aLtcirl)};
            \addlegendentry{TCIRL}
            \addplot[pattern=dots, pattern color=unknown_skyblue, draw=unknown_skyblue, bar shift=0pt] coordinates {(unknown, \aLunknown)};
            \addlegendentry{Q-Learn.}
            \addplot[pattern=north east lines, pattern color=oracle_blue, draw=oracle_blue, bar shift=0pt] coordinates {(oracle, \aLoracle)};
            \addlegendentry{Q-Learn.\,(known)}
            \addplot[pattern=grid, pattern color=dqn_orange, draw=dqn_orange, bar shift=0pt] coordinates {(dqn4, \aLdqnfour)};
            \addlegendentry{DQN (4)}
            \addplot[pattern=vertical lines, pattern color=dqn_amber, draw=dqn_amber, bar shift=0pt] coordinates {(dqn3, \aLdqnthree)};
            \addlegendentry{DQN (3)}
        \nextgroupplot[
            yticklabels={},
            title={#4},
            title style={at={(0.5,0)}, anchor=north, yshift=-8pt, font=\scriptsize},
        ]
            \addplot[fill=tcirl_green, draw=tcirl_green, bar shift=0pt] coordinates {(tcirl, \aRtcirl)};
            \addplot[pattern=dots, pattern color=unknown_skyblue, draw=unknown_skyblue, bar shift=0pt] coordinates {(unknown, \aRunknown)};
            \addplot[pattern=north east lines, pattern color=oracle_blue, draw=oracle_blue, bar shift=0pt] coordinates {(oracle, \aRoracle)};
            \addplot[pattern=grid, pattern color=dqn_orange, draw=dqn_orange, bar shift=0pt] coordinates {(dqn4, \aRdqnfour)};
            \addplot[pattern=vertical lines, pattern color=dqn_amber, draw=dqn_amber, bar shift=0pt] coordinates {(dqn3, \aRdqnthree)};
        \end{groupplot}
        \node[above] at ($(group c1r1.north)!0.5!(group c2r1.north)+(0,0.2cm)$) {\ref{acceptlegend}};
    \end{tikzpicture}
}

\newcommand{\createInferenceComparisonPlot}[5]{%
  \pgfplotstableread[col sep=comma]{#1}\geneinfdatatable
  \pgfplotstableread[col sep=comma]{#2}\trafficinfdatatable
  \begin{tikzpicture}
    \begin{axis}[
        xlabel={Training Steps},
        ylabel={#3},
        grid=major,
        width=\columnwidth,
        height=0.55\columnwidth,
        tick label style={font=\small},
        label style={font=\small},
        ymin=#4,
        ymax=#5,
        scaled x ticks=base 10:-3,
        xtick scale label code/.code={},
        xticklabel={\pgfmathprintnumber[fixed,precision=0]{\tick}k},
        legend style={font=\small, at={(0.98,0.02)}, anchor=south east},
      ]
      \addplot[draw=none, forget plot, name path=geneinflo] table [x=step, y=ci_lo] {\geneinfdatatable};
      \addplot[draw=none, forget plot, name path=geneinfhi] table [x=step, y=ci_hi] {\geneinfdatatable};
      \addplot[oracle/line] table [x=step, y=iqm] {\geneinfdatatable};
      \addlegendentry{Genetic therapy}
      \addplot[oracle/fill, forget plot] fill between[of=geneinflo and geneinfhi];

      \addplot[draw=none, forget plot, name path=trafficinflo] table [x=step, y=ci_lo] {\trafficinfdatatable};
      \addplot[draw=none, forget plot, name path=trafficinfhi] table [x=step, y=ci_hi] {\trafficinfdatatable};
      \addplot[tcirl/line] table [x=step, y=iqm] {\trafficinfdatatable};
      \addlegendentry{Traffic signal}
      \addplot[tcirl/fill, forget plot] fill between[of=trafficinflo and trafficinfhi];
    \end{axis}
  \end{tikzpicture}
}

\newcommand{\reals}{\mathbb{R}}

\newcommand{\dfa}[1]{\ensuremath{\mathcal{#1}}}
\newcommand{\dfaInputAlphabet}{\ensuremath{\Sigma}}
\newcommand{\dfaStates}{\ensuremath{Q}}
\newcommand{\dfaCommonState}{\ensuremath{q}}
\newcommand{\dfaInit}{\ensuremath{q_0}}
\newcommand{\dfaTrans}{\ensuremath{\delta}}
\newcommand{\dfaAcc}{\ensuremath{F}}
\newcommand{\languageOf}[1]{\ensuremath{\mathcal{L}(#1)}}
\newcommand{\atomic}{\ensuremath{\mathsf{AP}}}
\newcommand{\propAlphabet}{\ensuremath{2^{\atomic}}}
\newcommand{\propInput}{\ensuremath{\ell}}

\newcommand{\mdp}[1]{\ensuremath{\mathcal{#1}}}
\newcommand{\mdpDiscount}{\ensuremath{\gamma}}
\newcommand{\mdpLabel}{\ensuremath{L}}
\newcommand{\mdpStates}{\ensuremath{S}}
\newcommand{\mdpCommonState}{\ensuremath{s}}

\newcommand{\mdpActions}{\ensuremath{A}}
\newcommand{\mdpCommonAction}{\ensuremath{a}}
\newcommand{\mdpTrans}{\ensuremath{P}}
\newcommand{\mdpReward}{\ensuremath{R}}

\newcommand{\qTable}{\ensuremath{Q}}

\newcommand{\initDist}{\ensuremath{\mu_0}}
\newcommand{\phaseOne}{\ensuremath{P_1}}
\newcommand{\phaseTwo}{\ensuremath{P_2}}

\newcommand{\ltlf}{\ensuremath{\text{LTL}_f}}

\newcommand{\nextOp}{\ensuremath{\bigcirc}}
\newcommand{\finallyOp}{\ensuremath{\lozenge}}
\newcommand{\globallyOp}{\ensuremath{\square}}
\newcommand{\untilOp}{\ensuremath{\mathbin{\mathcal{U}}}}

\newcommand{\queue}[1]{\ensuremath{y_{#1}}}

\newcommand{\causeFormula}{\ensuremath{\varphi_c}}
\newcommand{\causeDFA}{\ensuremath{\dfa{A}_c}}
\newcommand{\hypothesisDFA}{\ensuremath{\dfa{H}}}

\newcommand{\markerR}{\ensuremath{x_r}}
\newcommand{\markerA}{\ensuremath{x_a}}

\newcommand{\causalSample}{\ensuremath{\mathcal{X}}}

\newcommand{\satTransVar}[3]{\ensuremath{d_{#1,#2,#3}}}
\newcommand{\satRunVar}[2]{\ensuremath{x_{#1,#2}}}
\newcommand{\satAccVar}[1]{\ensuremath{f_{#1}}}

\newtheorem{definition}{Definition}
\newtheorem{theorem}{Theorem}
\newtheorem{lemma}{Lemma}

\newtheorem{assumption}{Assumption}

\begin{document}

\title{
   Temporal-Causal Inference for Reinforcement Learning\\via Automata Learning
}
\author{Jan Corazza$^{1}$, Daniil Kaminskyi$^{1}$, Simon Lutz$^{1}$, Patrick Nossol$^{1}$, Hadi Partovi Aria$^{2}$, Zhe Xu$^2$, Daniel Neider$^1$%
   \thanks{$^{1}$Research Center Trustworthy Data Science and Security, TU Dortmund University, Dortmund, 44227, Germany.
         {\tt\small \{jan.corazza, daniil.kaminskyi, simon.lutz, patrick.nossol, daniel.neider\} @tu-dortmund.de }}%
   \thanks{$^{2}$School for Engineering of Matter, Transport, and Energy at Arizona State University, Tempe, Az 85281, USA.
         {\tt\small \{hpartovi, xzhe1\} @asu.edu }}%
   \thanks{Accepted for publication in the Proceedings of the 65th IEEE Conference on Decision and Control (CDC), Honolulu, Hawaii, USA, December 2026.}%
   \thanks{\copyright~2026 IEEE. Personal use of this material is permitted. Permission from IEEE must be obtained for all other uses, in any current or future media, including reprinting/republishing this material for advertising or promotional purposes, creating new collective works, for resale or redistribution to servers or lists, or reuse of any copyrighted component of this work in other works.}
}

\maketitle
\thispagestyle{empty}
\pagestyle{empty}

\begin{abstract}
   We consider reinforcement learning in environments with dynamics that undergo an irreversible phase transition governed by a hidden temporal pattern.
   The agent observes the base state but cannot observe the phase directly.
   We formalize this problem as a two-phase non-Markovian decision process and introduce Temporal-Causal Inference for Reinforcement Learning (TCIRL), a framework that jointly learns a control policy and infers the hidden temporal cause of the phase transition.
   TCIRL maintains a hypothesis deterministic finite automaton (DFA) to track what phase is active and refines it via counterexample-driven SAT-based synthesis.
   We prove that the hypothesis converges almost surely to a DFA recognizing the true cause language on all attainable label sequences, yielding an optimal policy for the original non-Markovian decision process.
   Experiments on a genetic therapy gridworld and a traffic signal environment show that TCIRL recovers the correct cause DFA and matches the full-information baseline in both domains.
\end{abstract}

\section{INTRODUCTION}
\label{sec:introduction}

Reinforcement learning (RL) methods for control typically assume Markovian, fully observable dynamics~\cite{sutton_barto2018}.
In many systems, however, the transition dynamics depend on hidden temporal structure that the agent cannot observe directly.
Such history-dependent dynamics give rise to non-Markovian decision processes (NMDPs), where the effect of an action depends on more than the current state~\cite{bacchus_nmdp1997,brafman_rdp2019}.
Two visits to the same state may therefore require different decisions because the governing dynamics depend on the trajectory that led there.
Standard Markovian RL aliases these situations together, so a memoryless policy over the observed state need not be optimal in general~\cite{littman_memoryless1994}.
The goal is to recover a compact finite memory of the past that restores Markovian dynamics in an augmented state space.
Our running example is a medical treatment scenario in which a patient's symptoms respond stochastically to treatment, but the treatment efficacy depends on whether a specific sequence of genetic modifications has been performed, a condition that is not directly observable from symptoms alone.

We study environments in which a hidden temporal cause governs the transition dynamics (Assumption~\ref{asm:two_phase}).
A hidden temporal logic formula $\causeFormula$ specifies this cause: once the execution so far satisfies $\causeFormula$, the dynamics permanently switch from Phase~1 to Phase~2.
The agent does not observe the phase directly.
Instead, it receives stochastic signals correlated with the active phase, which provide only indirect evidence that the transition has occurred.

\begin{figure}[t]
  \centering
  \begin{tikzpicture}[every node/.style={font=\scriptsize}]
    \def\s{0.55}
    \draw (0,0) rectangle (5*\s, 5*\s);
    \foreach \i in {1,...,4} {
        \draw (\i*\s, 0) -- (\i*\s, 5*\s);
        \draw (0, \i*\s) -- (5*\s, \i*\s);
      }
    \node at (1.5*\s, 3.5*\s) {$G_1$};
    \node at (3.5*\s, 3.5*\s) {$G_2$};
    \node at (1.5*\s, 1.5*\s) {$G_4$};
    \node at (3.5*\s, 1.5*\s) {$G_3$};
    \node at (2.5*\s, 0.5*\s) {$M$};
    \node[draw, inner sep=4pt, align=left, anchor=east]
    (box) at (-0.55, 2.5*\s) {$D \in \{0,\ldots,9\}$\\[2pt]$H \in \{0,\ldots,9\}$};
    \node[font=\normalsize] at ($(box.east)!0.5!(0,2.5*\s)$) {$\times$};
  \end{tikzpicture}
  \caption{Genetic therapy gridworld. The agent navigates a $5 \times 5$ grid with four gene sites $G_{1}$ to $G_4$ and a medicine cell $M$.
    Site $G_3$ is a distractor and does not affect the cause.
    The state includes position, symptom level $D$, and harm level $H$.}
  \label{fig:gridworld}
\end{figure}

\begin{figure}[t]
  \centering
  \pgfplotstableread[col sep=comma]{results/gene_editing/q_learning-cause_unknown/trajectories_disease.csv}\introUnknown
  \pgfplotstableread[col sep=comma]{results/gene_editing/tcirl-sat/trajectories_disease.csv}\introTcirl
  \begin{tikzpicture}
    \begin{axis}[
        xlabel={Time in Eval Episode},
        ylabel={Symptom Level},
        grid=major,
        width=\columnwidth,
        height=0.55\columnwidth,
        tick label style={font=\small},
        label style={font=\small},
        ymin=0, ymax=9,
        legend style={
            at={(0.97,0.06)},
            anchor=south east,
            font=\tiny,
            legend columns=1,
            legend cell align=left,
            inner sep=1pt,
          },
      ]
      \addTrajSeries{unknown}{\introUnknown}{mark}{Cause unknown}
      \addTrajSeries{tcirl}{\introTcirl}{mark}{Cause inferred (ours)}
      \draw[tcirl_green, thin, dashed] (axis cs:28,0) -- (axis cs:28,9);
      \node[anchor=east, font=\scriptsize, tcirl_green] (p1) at (axis cs:23,1.5) {Phase 1};
      \node[anchor=west, font=\scriptsize, tcirl_green] (p2) at (axis cs:33,1.5) {Phase 2};
      \draw[tcirl_green, thin, -{Stealth}] (p1) -- (p2);
      \draw[unknown_red, thin, dashed, {Stealth}-{Stealth}] (axis cs:25,8) -- (axis cs:100,8);
      \node[anchor=south, font=\scriptsize, unknown_red] at (axis cs:65,8) {Remains in Phase 1};
    \end{axis}
  \end{tikzpicture}
  \caption{Symptom level during the final evaluation episode (after training). The TCIRL agent triggers the phase transition around step~20, after which the symptoms decline. Standard Q-learning remains in Phase~1. (IQM with 95\% CI, 50 seeds.)}
  \label{fig:intro_symptom}
\end{figure}

We illustrate this with a genetic therapy gridworld (Figure~\ref{fig:gridworld}).
The agent navigates a $5 \times 5$ grid with four gene sites ($G_1$ to $G_4$) and a medicine cell ($M$).
The state includes a symptom level $D \in \{0,\ldots,9\}$ and a harm level $H \in \{0,\ldots,9\}$.
In Phase~1, the symptom level increases by 1 with probability 0.75, whereas in Phase~2, it decreases by 1 with probability 0.75.
Otherwise, it remains unchanged.
Administering the medicine reduces the symptom level by 5, but it also resets harm to its maximum (this way, no continuous administering of medicine is possible due to harmful side effects).
On every non-medicine step, harm decreases by 1 until it reaches 0.
Applying medicine while the current harm level is positive terminates the episode, so treatment timing matters.
The phase transition is triggered by visiting gene sites in the order $G_1, G_2, G_1, G_4$.

Figure~\ref{fig:intro_symptom} shows that our method, Temporal-Causal Inference for Reinforcement Learning (TCIRL), learns the correct gene-site sequence to trigger Phase~2, after which the symptoms decline steadily.
Standard Q-learning~\cite{watkins_dayan1992}, which lacks memory for the required temporal pattern, remains in Phase~1.

Our approach maintains a hypothesis DFA $\hypothesisDFA$ that models the temporal cause.
Starting from a trivial one-state hypothesis which rejects every label sequence, the agent refines $\hypothesisDFA$ over training by exploiting the stochastic evidence available in each episode.
The method requires domain knowledge that specifies exclusive characteristics of each phase.
We call these \emph{markers}: in the genetic therapy example, a symptom increase can only occur in Phase~1, and a spontaneous recovery can only occur in Phase~2.
After each episode, the agent checks whether the episode label sequence is inconsistent with its current hypothesis.
If so, a Boolean satisfiability (SAT) solver synthesizes a new minimal DFA consistent with all prior observations.

The markers are stochastic: they need not fire on every episode, and individual observations may be ambiguous.
However, we assume that they are phase-exclusive and that, for every wrong hypothesis, there exists some reachable episode whose markers expose the error (Assumption~\ref{asm:markers}).
Under this assumption, we prove that the hypothesis converges almost surely to the correct cause DFA (Theorem~\ref{thm:exact_recovery}), and the agent converges to an optimal policy (Theorem~\ref{thm:convergence}).

We validate TCIRL on two domains with opposite causal effects, and compare its performance to a baseline with full access to the causal structure from the start.
In genetic therapy, triggering the cause is beneficial (symptoms decline).
In a traffic signal environment, triggering it is harmful (congestion increases).
In both cases, TCIRL recovers the correct cause DFA and matches the full-information baseline.

To summarize, our work makes the following contributions:
\begin{enumerate}
  \item A two-phase NMDP formalization for environments with hidden temporal causes, and TCIRL, an algorithm that jointly learns a policy and infers a cause DFA via counterexample-driven SAT-based synthesis.
  \item Almost-sure convergence guarantees: the inferred hypothesis recovers the true cause language on all attainable label sequences (Theorem~\ref{thm:exact_recovery}), and the policy converges to the optimal value of the NMDP (Theorem~\ref{thm:convergence}).
  \item Experimental validation on two control domains, a genetic therapy gridworld where triggering the cause is beneficial and a traffic signal environment where it is harmful, showing that TCIRL recovers the correct cause DFA and matches the full-information baseline in both settings.
\end{enumerate}

\section{BACKGROUND}
\label{sec:background}

We study a restricted class of NMDPs in which the non-Markovian dependence arises from a hidden causal mechanism.
The system evolves in one of two Markovian phases, and the active phase is determined by whether a temporal pattern in the label history has occurred.
This setting is common in control, where unobserved mode switches alter the local dynamics while the controller still observes only the physical state.
To specify such patterns, we use Linear Temporal Logic over finite traces ($\ltlf$)~\cite{degiacomo_vardi2013}, which can express sequencing, eventuality, and persistence constraints over propositional labels.
The key property we exploit is that every $\ltlf$ formula can be compiled into a DFA, so monitoring whether the pattern has occurred reduces to tracking a finite automaton state.

\begin{definition}[Labeled NMDP]
   \label{def:nmdp}
   A labeled non-Markovian decision process (NMDP) is a tuple $\mdp{N} = (\mdpStates, \mdpActions, \mdpTrans, \mdpReward, \initDist, \mdpDiscount, \atomic, \mdpLabel)$ where $\mdpStates$ is a finite state space, $\mdpActions$ is a finite action space, $\mdpTrans \colon (\mdpStates \times \mdpActions)^+ \times \mdpStates \to \Delta(\mdpStates)$ is a history-dependent transition function, $\mdpReward \colon \mdpStates \times \mdpActions \times \mdpStates \to \reals$ is a Markovian reward function, $\initDist$ is an initial state distribution, $\mdpDiscount \in [0,1)$ is the discount factor, $\atomic$ is a finite set of atomic propositions, and $\mdpLabel \colon \mdpStates \times \mdpActions \times \mdpStates \to \propAlphabet$ labels each transition.
\end{definition}

An agent generates trajectories $\mdpCommonState_0, \mdpCommonAction_0, \mdpCommonState_1, \ldots$ with $\mdpCommonState_0 \sim \initDist$ and $\mdpCommonState_{n+1} \sim \mdpTrans(\,\cdot \mid \mdpCommonState_0, \mdpCommonAction_0, \ldots, \mdpCommonState_n, \mdpCommonAction_n)$.
Each transition produces a label $\mdpLabel(\mdpCommonState_t, \mdpCommonAction_t, \mdpCommonState_{t+1}) \in \propAlphabet$, and every trajectory a label sequence, defined by $\mdpLabel(\mdpCommonState_0) = \varepsilon$ and $\mdpLabel(\mdpCommonState_0, \mdpCommonAction_0, \ldots, \mdpCommonState_t) = \mdpLabel(\mdpCommonState_0, \mdpCommonAction_0, \mdpCommonState_1) \cdots \mdpLabel(\mdpCommonState_{t-1}, \mdpCommonAction_{t-1}, \mdpCommonState_t)$.
A label sequence $w = \sigma_0 \cdots \sigma_{k-1}$ is \emph{attainable} if there exists a trajectory $\mdpCommonState_0, \mdpCommonAction_0, \ldots, \mdpCommonState_k, \ldots$ with $\mdpCommonState_0 \in \mathrm{supp}(\initDist)$, positive transition probability at every step, and $\mdpLabel(\mdpCommonState_i, \mdpCommonAction_i, \mdpCommonState_{i+1}) = \sigma_i$ for every $i=0, \ldots, k-1$.
A label sequence is \emph{$m$-attainable} if it is attainable and has length at most $m$.

Our focus is a structured hidden-state problem in which the relevant past is summarized by a single regular temporal condition on the label history.

\begin{definition}[$\ltlf$]
   \label{def:ltlf}
   $\ltlf$ (Linear Temporal Logic over finite traces) formulas over $\atomic$ are defined by the grammar
   $\varphi ::= p \mid \neg \varphi \mid \varphi_1 \!\land\! \varphi_2 \mid \nextOp \varphi \mid \varphi_1 \untilOp \varphi_2$
   where $p \in \atomic$.
   Derived operators: $\finallyOp \varphi \equiv \top \untilOp \varphi$ (eventually) and $\globallyOp \varphi \equiv \neg \finallyOp \neg \varphi$ (always).
   Satisfaction is defined over finite traces $w = \sigma_0 \cdots \sigma_{n-1}$ with $\sigma_i \in \propAlphabet$:
   $w, i \models p$ iff $p \in \sigma_i$;
   $w, i \models \nextOp \varphi$ iff $i < n{-}1$ and $w, i{+}1 \models \varphi$;
   $w, i \models \varphi_1 \untilOp \varphi_2$ iff there exists $j \ge i$ with $j < n$ such that $w, j \models \varphi_2$ and $w, k \models \varphi_1$ for all $i \le k < j$.
   We write $w \models \varphi$ for $w, 0 \models \varphi$.
   The language of $\varphi$ is $\languageOf{\varphi} = \{w \in {\propAlphabet}^{*} \mid w \models \varphi\}$.
\end{definition}

We now formalize this two-phase causal structure.
The non-Markovian dependence is captured by a single $\ltlf$ formula $\causeFormula$, which we call the cause formula.
We refer to $\languageOf{\causeFormula}$ as the cause language of the NMDP.
When the label history satisfies $\causeFormula$, the system switches permanently from $\phaseOne$ to $\phaseTwo$ dynamics.

\begin{assumption}[Two-Phase NMDP]
   \label{asm:two_phase}
   There exist Markovian kernels $\phaseOne, \phaseTwo \colon \mdpStates \times \mdpActions \times \mdpStates \to [0,1]$ and an $\ltlf$ formula $\causeFormula$ over $\atomic$ such that for every trajectory $\xi_n = \mdpCommonState_0, \mdpCommonAction_0, \ldots, \mdpCommonState_n$ and action $\mdpCommonAction \in \mdpActions$,
   \begin{align}
      \label{eq:two_phase}
      \mdpTrans(\mdpCommonState_{n+1} \mid \xi_n, \mdpCommonAction) = \begin{cases}
                                                                         \phaseTwo(\mdpCommonState_{n+1} \mid \mdpCommonState_n, \mdpCommonAction) & \text{if } \mdpLabel(\xi_n) \models \causeFormula, \\
                                                                         \phaseOne(\mdpCommonState_{n+1} \mid \mdpCommonState_n, \mdpCommonAction) & \text{otherwise.}
                                                                      \end{cases}
   \end{align}
   The formula $\causeFormula$ is closed under extension: $w \models \causeFormula$ implies $wv \models \causeFormula$ for all~$v$.
\end{assumption}

The reward $\mdpReward$ is Markovian; only the cause formula $\causeFormula$ introduces dependence on the past.
The agent observes $\mdpCommonState_t$ but not the phase, so the causal mechanism is hidden.
Extension-closure captures the irreversibility of the effect: once the cause fires, it remains satisfied regardless of future observations.

Because $\causeFormula$ is an $\ltlf$ formula, it can be compiled into a DFA that monitors the label history and tracks which phase is active.
This is the finite-memory structure that TCIRL seeks to recover from data.

\begin{definition}[DFA]
   \label{def:dfa}
   A deterministic finite automaton (DFA) is a tuple $\dfa{A} = (\dfaStates, \dfaInit, \dfaInputAlphabet, \dfaTrans, \dfaAcc)$ with finite state set $\dfaStates$, initial state $\dfaInit \in \dfaStates$, alphabet $\dfaInputAlphabet$, transition function $\dfaTrans \colon \dfaStates \times \dfaInputAlphabet \to \dfaStates$, and accepting states $\dfaAcc \subseteq \dfaStates$.
   We write $\dfaTrans^*(\dfaCommonState, w)$ for the state reached from $\dfaCommonState$ after processing word $w$.
   Its language is $\languageOf{\dfa{A}} = \{w \in \dfaInputAlphabet^* \mid \dfaTrans^*(\dfaInit, w) \in \dfaAcc\}$.
   A DFA is successor-closed if $\forall q \in \dfaAcc,\, \forall \sigma \in \dfaInputAlphabet\!: \dfaTrans(q, \sigma) \in \dfaAcc$.
\end{definition}

By the $\ltlf$-to-DFA correspondence~\cite{degiacomo_vardi2013}, the cause language is recognized by some DFA.
Because $\causeFormula$ is closed under extension, there exists a successor-closed DFA $\causeDFA$ with $\languageOf{\causeDFA} = \languageOf{\causeFormula}$.
Once a run enters its accepting set, every continuation remains accepting.
Such a DFA is not unique; many successor-closed DFAs recognize the same cause language.
We write $n_{\causeFormula}$ to denote the minimum state count among all of them.

\section{METHOD}
\label{sec:method}

TCIRL jointly learns a policy and infers the hidden cause DFA from episode data.
Once a correct cause monitor is learned, the problem reduces to a standard product-MDP construction.
The real difficulty is to identify that monitor from partial stochastic evidence gathered during RL.
We first demonstrate how a hypothesis DFA induces a product MDP.
Second, we define the stochastic markers that provide indirect evidence about the cause and formalize our method for counterexample detection.
Then we describe the SAT-based synthesis step used to update the hypothesis.
Finally, we summarize the overall training procedure and state our convergence guarantees.

\subsection{Problem Setting}
\label{sec:problem_setting}

Any successor-closed DFA recognizing the cause language $\languageOf{\causeFormula}$ can serve as a phase monitor.
Stepping it on each transition label and checking acceptance determines whether $\causeFormula$ has been satisfied.
However, the agent does not know $\causeFormula$ and therefore cannot construct such a DFA directly.
Instead, it maintains a hypothesis DFA $\hypothesisDFA = (\dfaStates, \dfaInit, \propAlphabet, \dfaTrans, \dfaAcc)$ and interprets $q \in \dfaAcc$ as Phase~2 and $q \notin \dfaAcc$ as Phase~1.
We write $\mathrm{phase}(q) = 2$ if $q \in \dfaAcc$ and $1$ otherwise.
The product MDP $M_{\hypothesisDFA} = (\mdpStates \times \dfaStates, \initDist \times \{\dfaInit\}, \mdpActions, P_{\hypothesisDFA}, R_{\hypothesisDFA}, \mdpDiscount)$ has transition kernel
\begin{align*}
   P_{\hypothesisDFA} & ((\mdpCommonState, q), \mdpCommonAction, (\mdpCommonState', q'))                                                                                                                 \\
                      & = \begin{cases}
                             P_{\mathrm{phase}(q)}(\mdpCommonState' \mid \mdpCommonState, \mdpCommonAction) & \text{if } q' = \dfaTrans(q, \mdpLabel(\mdpCommonState, \mdpCommonAction, \mdpCommonState')), \\
                             0                                                                              & \text{otherwise,}
                          \end{cases}
\end{align*}
and Markovian reward $R_{\hypothesisDFA}((\mdpCommonState, q), \mdpCommonAction, (\mdpCommonState', q')) = \mdpReward(\mdpCommonState, \mdpCommonAction, \mdpCommonState')$.
If $u \in \languageOf{\hypothesisDFA} \iff u \models \causeFormula$ for every attainable label sequence~$u$, then the hypothesis assigns the same phase as the true cause along every reachable trajectory.
Consequently, $M_{\hypothesisDFA}$ has the same transition dynamics as the NMDP on reachable states.
In Theorem~\ref{thm:convergence}, we show that the optimal values of $M_{\hypothesisDFA}$ and the NMDP coincide.

\subsection{Marker Assumption}
\label{sec:markers}

Each completed episode yields a single label sequence over $\propAlphabet$.
TCIRL uses that same sequence both to run the current hypothesis DFA and to test whether marker evidence appears before or after a given prefix.

The agent cannot observe the phase directly.
Instead, TCIRL relies on two atomic propositions $\markerR, \markerA \in \atomic$ whose occurrence in transition labels is exclusive to one phase.
The rejection marker $\markerR$ can only appear in Phase~1, certifying that the cause has not yet fired, and the acceptance marker $\markerA$ can only appear in Phase~2, certifying that it has.
In the genetic therapy example, a symptom increase serves as $\markerR$ and a spontaneous recovery (as opposed to one caused by directly applying the medicine) serves as $\markerA$.
Identifying such markers requires domain knowledge about the phase-dependent dynamics, but not knowledge of the cause formula $\causeFormula$ itself.

For label sequences $u, w$ with $w$ extending $u$, define:
\begin{itemize}
   \item $W_r(u, w)$: the suffix of $w$ after prefix $u$ contains some symbol $\sigma$ with $\markerR \in \sigma$.
   \item $W_a(u, w)$: the symbol immediately after prefix $u$ in $w$ contains $\markerA$.
\end{itemize}

\begin{assumption}[Marker Soundness and Completeness]
   \label{asm:markers}
   Fix two propositions $\markerR, \markerA \in \atomic$.
   We assume that the following holds for every attainable label sequence $u$:
   \begin{enumerate}
      \item \textbf{Soundness.}
            If $u \models \causeFormula$ and $w$ is an attainable extension of $u$, then $W_r(u, w)$ does not hold.
            If $w$ is an attainable extension of $u$ and $W_a(u, w)$ holds, then $u \models \causeFormula$.
      \item \textbf{Completeness.}
            If $u \not\models \causeFormula$, there exists an attainable extension $w$ of $u$ with $W_r(u, w)$.
            If $u \models \causeFormula$, there exists an attainable extension $w$ of $u$ with $W_a(u, w)$.
   \end{enumerate}
\end{assumption}

The markers may be probabilistic: Assumption~\ref{asm:markers} does not require every episode to expose $\markerR$ or $\markerA$.
It requires only that every incorrect hypothesis has an attainable witness whose markers expose the error.

\subsection{Counterexample Detection}
\label{sec:counterexample}

A completed episode yields an attainable label sequence $w$.
We say $w$ is a counterexample to hypothesis $\hypothesisDFA$ if either:
\begin{enumerate}
   \item there exists a prefix $u$ of $w$ with $\hypothesisDFA$ accepting $u$ and $W_r(u, w)$ holding (false positive), or
   \item there exists a prefix $u$ of $w$ with $\hypothesisDFA$ rejecting $u$ and $W_a(u, w)$ holding (false negative).
\end{enumerate}
Detection proceeds by running $\hypothesisDFA$ on the label sequence, checking at each prefix whether the hypothesis state is consistent with all $\markerR$ and $\markerA$ evidence in the episode (Algorithm~\ref{alg:tcirl}, line~\ref{alg:line:isce}).
TCIRL retains detected counterexamples in the set $\causalSample$ (line~\ref{alg:line:store}).

\subsection{SAT-Based Cause DFA Synthesis}
\label{sec:sat_synthesis}

When a counterexample is detected, TCIRL re-synthesizes the hypothesis from scratch (line~\ref{alg:line:sat}).
The goal is to find a minimal DFA consistent with all marker evidence accumulated so far.
To this end, the solver encodes the constraints for a fixed candidate state count~$n$ as a Boolean satisfiability (SAT) instance $\Psi_n^{\causalSample}$ and queries a SAT solver.
It iterates $n = 1, 2, 3, \ldots$ until the first satisfiable instance, yielding a minimal consistent DFA $\hypothesisDFA$.
Because $\causeFormula$ defines a regular language, the true cause language is recognized by some successor-closed DFA, so the search terminates at some finite $n$, in particular at some $n \le n_{\causeFormula}$.

From the counterexample set $\causalSample$, we extract finite certified prefix sets:
\begin{align*}
   N(\causalSample) & = \{u : \exists w \in \causalSample,\, W_r(u, w)\}, \\
   P(\causalSample) & = \{u : \exists w \in \causalSample,\, W_a(u, w)\}.
\end{align*}
By Assumption~\ref{asm:markers}, every $u \in N(\causalSample)$ satisfies $u \not\models \causeFormula$ and every $u \in P(\causalSample)$ satisfies $u \models \causeFormula$.
Let $\mathrm{Pref}(\causalSample) = \{u : \exists w \in \causalSample,\, \exists v \text{ such that } uv = w\}$ denote the prefix set of the observed counterexamples.

For a candidate state count $n$ with states $\{0, \ldots, n{-}1\}$, the formula $\Psi_n^{\causalSample}$ uses three sets of propositional variables: transition variables $\satTransVar{p}{\sigma}{q}$ asserting that state $p$ moves to $q$ on symbol $\sigma$; acceptance variables $\satAccVar{q}$ marking state $q$ as accepting; and run variables $\satRunVar{u}{q}$ asserting that the DFA run on prefix $u \in \mathrm{Pref}(\causalSample)$ ends in state~$q$.
The execution variables range over prefixes of the observed counterexamples, while $N(\causalSample)$ and $P(\causalSample)$ constrain whether the reached states must be rejecting or accepting.
Following JIRP~\cite{jirp}, we construct the formula $\Psi_n^{\causalSample}$ to impose the following constraints:
\begin{enumerate}
   \item\label{itm:sat_determinism_totality} Determinism and totality: for each $(p, \sigma)$, exactly one $q$ satisfies $\satTransVar{p}{\sigma}{q}$.%
   \item\label{itm:sat_execution_consistency} Execution base and consistency: $\satRunVar{\varepsilon}{0}$ holds, exactly one state is assigned to each prefix in $\mathrm{Pref}(\causalSample)$, and for each non-empty prefix $u\sigma$: $\satRunVar{u}{p} \!\land\! \satTransVar{p}{\sigma}{q} \Rightarrow \satRunVar{u\sigma}{q}$.
   \item\label{itm:sat_successor_closed_acceptance} Successor-closed acceptance: $\satAccVar{p} \land \satTransVar{p}{\sigma}{q} \Rightarrow \satAccVar{q}$ for all $\sigma$.
   \item\label{itm:sat_absorbing_accepting} Absorbing accepting: $\satAccVar{q} \Rightarrow \satTransVar{q}{\sigma}{q}$ for all $\sigma$.
   \item\label{itm:sat_single_accepting_state} Single accepting state: exactly one $q$ satisfies $\satAccVar{q}$.
   \item\label{itm:sat_consistency_constraints} Consistency constraints: $\satRunVar{u}{q} \Rightarrow \neg\,\satAccVar{q}$ for $u \in N(\causalSample)$, and $\satRunVar{u}{q} \Rightarrow \satAccVar{q}$ for $u \in P(\causalSample)$.
\end{enumerate}
The formula $\Psi_n^{\causalSample}$ is a conjunction of these constraints which ensure two important properties: (1) it is satisfiable if and only if there exists an $n$-state DFA consistent with $\causalSample$, and (2) any satisfying assignment contains enough information to reconstruct such a DFA (one just reads off the transition table from $\satTransVar{p}{\sigma}{q}$ and the acceptance vector from $\satAccVar{q}$).
Constraint~\ref{itm:sat_successor_closed_acceptance} reflects extension-closure of $\causeFormula$.
Constraints~\ref{itm:sat_absorbing_accepting} and~\ref{itm:sat_single_accepting_state} hold without loss of generality for minimal successor-closed DFAs: under successor-closure all accepting states accept identical suffixes, so minimization merges them into a single absorbing state.

\subsection{Training Procedure}
\label{sec:training_procedure}

Algorithm~\ref{alg:tcirl} contains the pseudocode for our method.
The agent performs $\varepsilon$-greedy Q-learning on the product of the base state and the current hypothesis state.
Label sequences from completed episodes are added to $\causalSample$ only when they are counterexamples to the current DFA.
After each observed counterexample, our SAT-based synthesis method described in Section~\ref{sec:sat_synthesis} recomputes the smallest consistent DFA.
If the new hypothesis differs from the old one, TCIRL resets the Q-table to the optimistic initialization $q_{\mathrm{init}}$ before continuing.
This prevents carrying over values learned under incompatible automaton semantics.

\begin{algorithm}[t]
   \caption{TCIRL: Temporal-Causal Inference for RL}
   \label{alg:tcirl}
   \begin{algorithmic}[1]
      \State \textbf{Input:} NMDP $\mdp{N}$, markers $\markerR, \markerA$, optimistic value $q_{\mathrm{init}}$, train steps $T$
      \State Initialize $\hypothesisDFA \gets$ trivial 1-state rejecting DFA
      \State Initialize $\qTable \gets q_{\mathrm{init}}$ over $\mdpStates \times \dfaStates_{\hypothesisDFA} \times \mdpActions$
      \State Initialize counterexample set $\causalSample \gets \emptyset$ and global step counter $t \gets 0$
      \While{$t < T$}
      \State Reset the environment, set hypothesis state $\dfaCommonState \gets \dfaInit$, and set episode trace $w \gets \varepsilon$
      \While{episode not ended and $t < T$}
      \State Observe $\mdpCommonState_t$
      \State Select $\mdpCommonAction_t$ via $\varepsilon$-greedy on $\qTable(\mdpCommonState_t, \dfaCommonState, \cdot)$
      \State Execute $\mdpCommonAction_t$, observe reward $r_t$, next state $\mdpCommonState_{t+1}$, and label $\propInput_t$
      \State $\dfaCommonState' \gets \dfaTrans_{\hypothesisDFA}(\dfaCommonState, \propInput_t)$; append $\propInput_t$ to $w$
      \State $\qTable(\mdpCommonState_t, \dfaCommonState, \mdpCommonAction_t) \mathrel{{\gets}} r_t + \mdpDiscount \max_{\mdpCommonAction'} \qTable(\mdpCommonState_{t+1}, \dfaCommonState', \mdpCommonAction')$
      \State $\dfaCommonState \gets \dfaCommonState'$
      \State $t \gets t + 1$
      \EndWhile
      \If{$\textsc{IsCounterexample}(w, \hypothesisDFA, \markerR, \markerA)$} \label{alg:line:isce}
      \State $\causalSample \gets \causalSample \cup \{w\}$ \label{alg:line:store}
      \State $\hypothesisDFA_{\mathrm{old}} \gets \hypothesisDFA$
      \State $\hypothesisDFA \gets \textsc{SATSynthesize}(\causalSample)$ \label{alg:line:sat}
      \If{$\hypothesisDFA \neq \hypothesisDFA_{\mathrm{old}}$}
      \State Reinitialize $\qTable \gets q_{\mathrm{init}}$ over $\mdpStates \times \dfaStates_{\hypothesisDFA} \times \mdpActions$
      \EndIf
      \EndIf
      \EndWhile
      \State \textbf{Return:} $\qTable$, $\hypothesisDFA$
   \end{algorithmic}
\end{algorithm}

\subsection{Theoretical Guarantees}
\label{sec:theory}

We establish two main results: almost-sure recovery of the true cause language (Theorem~\ref{thm:exact_recovery}) and convergence to an optimal policy (Theorem~\ref{thm:convergence}).
Both proofs rely on an exploration property of $\varepsilon$-greedy policies.

\begin{lemma}[Exploration]
   \label{lem:exploration}
   Under $\varepsilon$-greedy exploration with $\varepsilon > 0$ and episode length at least $m$, every $m$-attainable label sequence is observed infinitely often almost surely.
\end{lemma}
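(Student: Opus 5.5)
Fix an $m$-attainable label sequence $w = \sigma_0 \cdots \sigma_{k-1}$ with $k \le m$. By definition it has a witnessing trajectory $\mdpCommonState_0, \mdpCommonAction_0, \ldots, \mdpCommonState_k$ with $\mdpCommonState_0 \in \mathrm{supp}(\initDist)$, positive NMDP transition probability at every step, and the prescribed labels. The plan is to show that, in every episode, the probability of reproducing this trajectory is bounded below by a constant $\eta > 0$ that does not depend on the episode index, the current Q-table, or the current hypothesis $\hypothesisDFA$. We then conclude with a Borel--Cantelli argument.

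\emph{Step 1: a uniform per-episode lower bound.} Under $\varepsilon$-greedy action selection, every action is chosen with probability at least $\varepsilon/|\mdpActions|$, whatever the Q-values and the hypothesis state are. So at each step $i<k$ the action $\mdpCommonAction_i$ is taken with probability at least $\varepsilon/|\mdpActions|$. Given that choice, $\mdpCommonState_{i+1}$ follows with probability $\mdpTrans(\mdpCommonState_{i+1}\mid \xi_i,\mdpCommonAction_i)$. This quantity depends only on the true history, by Assumption~\ref{asm:two_phase}, and not on the agent's hypothesis, and it is positive by the choice of witness.

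There are only finitely many $m$-attainable sequences, since $\atomic$ is finite. Each has finitely many witnessing trajectories, since $\mdpStates$ and $\mdpActions$ are finite. So we may take $\eta$ to be the minimum over this finite collection of
\[
\initDist(\mdpCommonState_0)\,(\varepsilon/|\mdpActions|)^{k}\prod_{i<k}\mdpTrans(\mdpCommonState_{i+1}\mid\xi_i,\mdpCommonAction_i),
\]
and this minimum is strictly positive. Because episode length is at least $m \ge k$, the first $k$ labels of the episode trace equal $w$ with probability at least $\eta$. Here "observed" means appearing as a prefix of the episode label sequence, which is what counterexample detection uses.

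\emph{Step 2: conditional Borel--Cantelli.} Let $E_j$ be the event that episode $j$ begins with the trajectory above, and let $\mathcal{F}_{j-1}$ be the history up to the start of episode $j$. Episodes are reset, so Step 1 gives $\Pr(E_j \mid \mathcal{F}_{j-1}) \ge \eta$ almost surely. We need infinitely many episodes to occur; this holds in the limit $T\to\infty$, where each episode terminates or is truncated. Then $\sum_j \Pr(E_j\mid\mathcal{F}_{j-1}) = \infty$ almost surely. Lévy's extension of the Borel--Cantelli lemma then gives $E_j$ infinitely often almost surely.

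A more elementary route works as well. The probability that none of $E_{j}, \ldots, E_{j+n}$ occurs is at most $(1-\eta)^{n+1} \to 0$. Taking a countable union over $j$ shows that only finitely many $E_j$ failing to occur after any point has probability zero. A countable (indeed finite) intersection over all $m$-attainable $w$ finishes the proof.

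\emph{Main obstacle.} The delicate point is the uniformity of $\eta$. The policy is non-stationary: the Q-table is reset, the hypothesis changes, and the agent's product state can differ from the true phase. So independence across episodes fails, and the argument must rest on the conditional lower bound. The key observation is that the bound depends only on $\varepsilon$ and the true NMDP kernel, never on the agent's internal state. A secondary subtlety is ensuring that episodes are not cut short before $k$ steps, for example by termination, as with medicine at positive harm. Attainability handles this, since the witnessing trajectory itself has positive probability and so does not terminate prematurely. I will state explicitly that "episode length at least $m$" means the episode is not externally truncated before step $m$.
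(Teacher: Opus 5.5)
Your proposal is correct and follows the same route as the paper. A witnessing trajectory has positive probability in every episode under $\varepsilon$-greedy, so the probability of missing $w$ for $n$ consecutive episodes decays geometrically and $w$ recurs almost surely. Your two additions make rigorous what the paper's $(1-p_w)^n$ bound tacitly assumes, namely that the changing policy keeps the per-episode probability bounded away from zero even though episodes are not independent: the uniform lower bound $\eta$, which holds regardless of the Q-table and the current hypothesis, and the conditional (L\'evy) Borel--Cantelli or chain-rule step.
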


\textit{Proof.}
For any $m$-attainable sequence $w$, there is a witnessing trajectory with positive probability under $\varepsilon$-greedy.
The probability of not observing $w$ in $n$ consecutive episodes decays as $(1-p_w)^n \to 0$, so $w$ is observed infinitely often a.s.
\hfill$\square$

For each wrong DFA with at most $n_{\causeFormula}$ states, consider the length of its shortest counterexample.
Let $m^*$ denote the maximum over these lengths.
This constant is finite because the set of such DFAs is finite and each wrong hypothesis has a counterexample by the completeness part of Assumption~\ref{asm:markers}.
To precisely compute $m^*$, one can adapt a similar procedure outlined in JIRP~\cite{jirp}, but we do not repeat this technical result here.

\begin{theorem}[Exact Recovery]
   \label{thm:exact_recovery}
   Under Assumptions~\ref{asm:two_phase} and~\ref{asm:markers}, with $\varepsilon$-greedy exploration ($\varepsilon > 0$) and episode length at least $m^*$, the hypothesis sequence stabilizes almost surely at a final hypothesis $\hypothesisDFA_{\mathrm{final}}$ satisfying
   \[
      u \in \languageOf{\hypothesisDFA_{\mathrm{final}}} \iff u \models \causeFormula
   \]
   for every attainable label sequence $u$.
\end{theorem}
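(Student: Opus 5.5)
We prove the statement in three steps: every synthesized hypothesis is small, updates happen only finitely often, and the hypothesis on which the updates stop must agree with $\causeFormula$ on all attainable sequences.

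\emph{Step 1 (invariant).} By soundness in Assumption~\ref{asm:markers}, every $u \in N(\causalSample)$ satisfies $u \not\models \causeFormula$, and every $u \in P(\causalSample)$ satisfies $u \models \causeFormula$. The minimal successor-closed DFA for $\languageOf{\causeFormula}$ has a single absorbing accepting state, so it satisfies all constraints of $\Psi_{n_{\causeFormula}}^{\causalSample}$. Hence every hypothesis produced by \textsc{SATSynthesize} has at most $n_{\causeFormula}$ states. There are finitely many such DFAs over the finite alphabet $\propAlphabet$.

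\emph{Step 2 (finitely many updates).} Each time a counterexample $w$ is added, the new hypothesis is consistent with $w$. Suppose $\hypothesisDFA$ accepts a prefix $u$ with $W_r(u,w)$; then $u \in N(\causalSample)$, so the new DFA must reject $u$. The false-negative case is symmetric. Since $\causalSample$ only grows, a hypothesis that is inconsistent with some element of $\causalSample$ can never be produced again. Every hypothesis other than the current one is discarded because of a witnessed inconsistency. So each DFA in the finite set of Step~1 is occupied during at most one maximal run of episodes, and the number of hypothesis changes is finite.

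One subtlety must be handled: a counterexample may lead SAT to return the same DFA. This cannot happen, because the old DFA is inconsistent with the new $w$. Consequently, the hypothesis sequence stabilizes surely, not merely almost surely, at some $\hypothesisDFA_{\mathrm{final}}$, after which no counterexample is ever observed.

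\emph{Step 3 (correctness of the limit).} This is the main obstacle. Suppose $\hypothesisDFA_{\mathrm{final}}$ disagrees with $\causeFormula$ on some attainable $u$. Since $\hypothesisDFA_{\mathrm{final}}$ has at most $n_{\causeFormula}$ states, the definition of $m^*$ via completeness gives an attainable counterexample $w$ of length at most $m^*$.

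To see why such a $w$ exists, take the case $u \not\models \causeFormula$ with $u$ accepted. Completeness gives an attainable extension $w$ with $W_r(u,w)$, and that $w$ is a false-positive counterexample. The case $u \models \causeFormula$ with $u$ rejected gives a false-negative counterexample in the same way.

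Then $w$ is $m^*$-attainable, and episodes have length at least $m^*$. By Lemma~\ref{lem:exploration}, $w$ appears as an episode prefix infinitely often almost surely. The detection check inspects prefixes of the full episode trace, and $W_r$ and $W_a$ are preserved under extending $w$. So the episode containing $w$ is flagged as a counterexample, which contradicts stabilization.

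There are countably many candidate $w$ and finitely many DFAs, so a union of null sets shows that, almost surely, the limit hypothesis admits no attainable disagreement. The care needed in this step is to confirm that $m^*$ bounds the witness length for the final hypothesis, which holds by Step~1, and that exploration applies regardless of the changing Q-table. The latter holds because the $\varepsilon$-uniform action probability gives each step of the witnessing trajectory a positive lower bound independent of $\qTable$.
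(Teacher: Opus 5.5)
Your proposal is correct and follows the paper's proof. It uses the same three ingredients: the size bound from soundness and the true successor-closed DFA, finitely many re-inferences because each counterexample rules out the current hypothesis, and completeness with $m^*$ and Lemma~\ref{lem:exploration} to contradict stabilization at a wrong hypothesis; along the way you make explicit several steps the paper leaves implicit (the monotone growth of $\causalSample$ preventing revisits, preservation of $W_r$ and $W_a$ when the witness is a proper prefix of the episode trace, and the union over the finitely many possible values of $\hypothesisDFA_{\mathrm{final}}$ to handle its randomness).
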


\textit{Proof.}
The argument proceeds by contradiction.
First, the hypothesis sequence must stabilize: each re-inference yields a minimal DFA consistent with the accumulated sample, and the true cause language is recognized by a successor-closed DFA with $n_{\causeFormula}$ states.
Hence every inferred hypothesis has at most $n_{\causeFormula}$ states.
Moreover, whenever a counterexample $w$ is added, it certifies some prefix $u$ that must be rejecting or accepting, while the current hypothesis classifies $u$ the opposite way.
The next SAT solution must satisfy that new certified constraint, so it cannot equal the old hypothesis.
Only finitely many such bounded-size hypotheses exist, so only finitely many re-inferences can occur.
Second, suppose the final hypothesis $\hypothesisDFA_{\mathrm{final}}$ misclassifies some attainable $u$.
By Assumption~\ref{asm:markers} (completeness), there exists an attainable witness $w$ that is a counterexample to $\hypothesisDFA_{\mathrm{final}}$ with $|w| \le m^*$.
Lemma~\ref{lem:exploration} guarantees that $w$ is observed almost surely, which would trigger another re-inference, contradicting stabilization.
Therefore $\hypothesisDFA_{\mathrm{final}}$ correctly classifies all attainable label sequences.
\hfill$\square$

\begin{theorem}[Policy Convergence]
   \label{thm:convergence}
   Under the conditions of Theorem~\ref{thm:exact_recovery} and the standard Q-learning assumptions on learning rates and continued exploration after the final hypothesis update, TCIRL converges a.s.\ to the optimal Q-function of $M_{\hypothesisDFA_{\mathrm{final}}}$.
   The resulting greedy policy achieves the optimal value of the NMDP.
\end{theorem}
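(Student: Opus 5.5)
By Theorem~\ref{thm:exact_recovery}, there is almost surely a finite (random) time $\tau$ after which the hypothesis equals $\hypothesisDFA_{\mathrm{final}}$ and is never changed again. At the last update, Algorithm~\ref{alg:tcirl} reinitializes $\qTable$ to $q_{\mathrm{init}}$ over $\mdpStates \times \dfaStates_{\hypothesisDFA_{\mathrm{final}}} \times \mdpActions$. From then on the procedure is ordinary tabular Q-learning, and we view it as running on $M_{\hypothesisDFA_{\mathrm{final}}}$. The plan has two parts. First, we show that the post-$\tau$ data stream consists of samples from the product MDP's true kernel, so the classical convergence theorem of Watkins and Dayan~\cite{watkins_dayan1992} applies. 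Second, we show that the optimal value of $M_{\hypothesisDFA_{\mathrm{final}}}$ equals the optimal value of the NMDP.

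\textbf{Step 1: the samples come from $P_{\hypothesisDFA_{\mathrm{final}}}$.} Take any time $t \ge \tau$ in an episode. Let $\xi_t$ be the trajectory so far and $q_t = \dfaTrans^*(\dfaInit, \mdpLabel(\xi_t))$ the hypothesis state. The label sequence $\mdpLabel(\xi_t)$ is attainable. By Theorem~\ref{thm:exact_recovery}, $q_t \in \dfaAcc$ holds exactly when $\mdpLabel(\xi_t) \models \causeFormula$. Assumption~\ref{asm:two_phase} then gives
\[
\mdpTrans(\cdot \mid \xi_t, \mdpCommonAction) = P_{\mathrm{phase}(q_t)}(\cdot \mid \mdpCommonState_t, \mdpCommonAction).
\]
The next automaton state is a deterministic function of the observed transition, so the conditional law of $(\mdpCommonState_{t+1}, q_{t+1})$ given the whole past is $P_{\hypothesisDFA_{\mathrm{final}}}((\mdpCommonState_t,q_t),\mdpCommonAction,\cdot)$. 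Rewards are Markovian and bounded, since $\mdpStates$ and $\mdpActions$ are finite. Hence the updates after $\tau$ are Q-learning updates on the finite MDP $M_{\hypothesisDFA_{\mathrm{final}}}$. Under the assumed Robbins--Monro step sizes and continued visitation of every reachable product state--action pair, $\qTable \to Q^*_{M_{\hypothesisDFA_{\mathrm{final}}}}$ almost surely on reachable pairs. The random start $\tau$ does not matter, because the convergence theorem holds from any initialization. I would restrict attention to the reachable part of the product, where the two dynamics agree; unreachable pairs do not affect the values.

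\textbf{Step 2: value equivalence.} This is the main obstacle, because NMDP policies may depend on the whole history while product policies see only $(\mdpCommonState, q)$. The inequality $V^*_{M_{\hypothesisDFA_{\mathrm{final}}}} \le V^*_{\mathcal{N}}$ is straightforward. Any product policy induces a history-dependent NMDP policy by computing $q_t$ from the labels, and Step 1 shows that the two induce the same trajectory distribution and hence the same return.

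For the reverse inequality, fix an arbitrary history-dependent NMDP policy $\pi$. Its trajectory law equals the law of a history-dependent policy on $M_{\hypothesisDFA_{\mathrm{final}}}$: at every step the next-state law depends on the history only through $(\mdpCommonState_t, q_t)$, and the product can reproduce the history because $q$ is a function of the history. The argument is an induction on $t$ using Step 1 at every attainable prefix. In a finite discounted MDP, history-dependent policies attain no more than the optimal stationary Markov value. Therefore $V^\pi_{\mathcal{N}} \le V^*_{M_{\hypothesisDFA_{\mathrm{final}}}}$.

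Combining the two inequalities, the greedy policy with respect to $Q^*_{M_{\hypothesisDFA_{\mathrm{final}}}}$, executed on the NMDP by tracking $q_t$, achieves $V^*_{\mathcal{N}}$. Since $\qTable$ converges, its greedy policy eventually coincides with an optimal greedy policy. This coincidence holds because the action set is finite, ties aside: once $\qTable$ is within half the minimal nonzero action gap of $Q^*$, its greedy actions are optimal.
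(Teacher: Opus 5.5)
Your proposal is correct and follows essentially the same route as the paper. The shared argument is that correct phase assignment on attainable histories makes $M_{\hypothesisDFA_{\mathrm{final}}}$ agree with the NMDP, induced and lifted policies give equal optimal values, and standard Q-learning convergence on the fixed finite product MDP completes the proof. You also make explicit three steps the paper leaves implicit: the post-stabilization samples are drawn from $P_{\hypothesisDFA_{\mathrm{final}}}$; history-dependent policies on a finite discounted MDP cannot exceed the stationary optimum, which is needed to close the lifted-policy inequality; and an action-gap argument shows the greedy policy of the converged $\qTable$ is optimal.
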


\textit{Proof.}
After stabilization, $\hypothesisDFA_{\mathrm{final}}$ assigns the correct phase on every reachable trajectory.
The product MDP $M_{\hypothesisDFA_{\mathrm{final}}}$ therefore has the same transition dynamics as the NMDP on reachable states.
Given a stationary policy $\pi$ on $M_{\hypothesisDFA_{\mathrm{final}}}$, define the induced history-dependent policy on the NMDP by feeding $\pi$ the base state together with the monitored DFA state $\dfaTrans^*(\dfaInit, \mdpLabel(\xi_t))$ of the current history $\xi_t$.
Conversely, any history-dependent NMDP policy lifts to a policy on $M_{\hypothesisDFA_{\mathrm{final}}}$ by ignoring the automaton component and using only the projected base-state history.
Under the natural coupling that shares the initial state and external randomness, corresponding trajectories in the two models then use the same action, transition kernel, and one-step reward at every time.
So the induced and lifted policies have the same discounted return, which implies that the optimal values of $M_{\hypothesisDFA_{\mathrm{final}}}$ and the NMDP coincide.
Standard Q-learning convergence~\cite{watkins_dayan1992} on the fixed finite MDP $M_{\hypothesisDFA_{\mathrm{final}}}$ therefore yields the optimal Q-function, and the resulting greedy policy achieves the optimal NMDP value.
\hfill$\square$

\section{EXPERIMENTS}
\label{sec:experiments}

We evaluate TCIRL on two scenarios with hidden cause DFAs and opposite causal effects.
These domains demonstrate how inferring the hidden causal structure in two-phase dynamics enables effective policy learning.
In each case study, we compare five algorithms:
\begin{enumerate}
   \item \textbf{TCIRL}: our method infers $\hypothesisDFA$ via counterexample-driven SAT synthesis using the soundness marker $\markerR$ and completeness marker $\markerA$.
   \item \textbf{Q-learning (cause known)}: the agent observes the correct cause DFA state from the start, serving as a full-information upper bound for tabular methods.
   \item \textbf{Q-learning (cause unknown)}: the agent does not observe any cause DFA state.
   \item \textbf{DQN (4)}: we employ a deep Q-network~\cite{mnih_dqn2015} with a two-layer MLP ($128$ neurons each) and a $4$-frame observation history buffer.
   \item \textbf{DQN (3)}: we limit the DQN to a $3$-frame history, which is insufficient to capture the temporal dependencies required by the cause formulas.
\end{enumerate}
All agents utilize $\varepsilon$-greedy exploration with a discount factor $\gamma = 0.99$ and a learning rate $\alpha = 0.1$.
The exploration probability $\varepsilon$ decays linearly from $1.0$ to $0.01$ over $100$k steps for the traffic domain and $200$k steps for the genetic therapy domain.
Episodes are truncated after $100$ steps.
We report the interquartile mean (IQM) with bootstrap $95\%$ confidence intervals over $50$ independent training seeds.
We evaluate each trained policy on $50$ greedy episodes using a fixed set of evaluation seeds.

\subsection{Case Study 1}
\label{sec:gene_editing}
We first investigate the genetic therapy scenario introduced in Section~\ref{sec:introduction}.
The cause formula is defined as $\causeFormula = \finallyOp(G_1 \land \nextOp(\finallyOp(G_2 \land \nextOp(\finallyOp(G_1 \land \nextOp(\finallyOp(G_4)))))))$, representing the specific sequence required by the genetic therapy.
Site $G_3$ is a distractor not involved in the cause formula, and the medicine cell $M$ affects the dynamics but is not itself causal.
To track phase transitions, we define two markers: $\markerR = \mathrm{inc}$ (a symptom increase possible only in Phase 1) and $\markerA = \mathrm{dec\_recovery}$ (a stochastic recovery event possible only in Phase 2).

Figure~\ref{fig:ge_reward} shows the average reward achieved per training step for the genetic therapy task.
These results demonstrate that TCIRL's policy successfully converges to the optimal strategy, closely matching the performance of the Q-Learning baseline with cause known and DQN with a buffer of $4$ observation frames.
In contrast, decreasing DQN's history buffer to $3$ frames or using Q-learning without cause knowledge leads to a failure to learn an effective policy.

We examine the symptom levels during evaluation episodes (Figure~\ref{fig:ge_disease}) to identify the specific strategies learned by each algorithm.
While all other algorithms initially use a single medication dose to reduce the starting symptom level, DQN with a 4-frame history skips this step and directly goes for genetic therapy.
Only TCIRL, DQN (4), and Q-learning with access to the causal DFA discover a consistent strategy to modify the genes and trigger the phase transition to the cured state.
Although a deep Q-network with a sufficient history buffer can capture these temporal dynamics, we observe that DQN with a smaller history buffer fails to learn a meaningful strategy, leaving symptom levels high.
Similarly, the Q-learning without causal knowledge prioritizes the immediate rewards from medication but fails to discover the underlying gene modification sequence.

\begin{figure}[t]
   \centering
   \createTrainingPlot
   {results/gene_editing/q_learning-cause_known/train_iqm_reward.csv}
   {results/gene_editing/q_learning-cause_unknown/train_iqm_reward.csv}
   {results/gene_editing/tcirl-sat/train_iqm_reward.csv}
   {results/gene_editing/dqn-cause_unknown_4/train_iqm_reward.csv}
   {results/gene_editing/dqn-cause_unknown_3/train_iqm_reward.csv}
   {Avg. Reward per Step}
   {0}
   {1}
   {south east}
   \caption{Genetic therapy: training reward.}
   \label{fig:ge_reward}
\end{figure}

\begin{figure}[t]
   \centering
   \createTrajectoryPlot
   {results/gene_editing/q_learning-cause_known/trajectories_disease.csv}
   {results/gene_editing/q_learning-cause_unknown/trajectories_disease.csv}
   {results/gene_editing/tcirl-sat/trajectories_disease.csv}
   {results/gene_editing/dqn-cause_unknown_4/trajectories_disease.csv}
   {results/gene_editing/dqn-cause_unknown_3/trajectories_disease.csv}
   {Symptom Level}
   {0}
   {9}
   \caption{Genetic therapy: symptom level during evaluation.}
   \label{fig:ge_disease}
\end{figure}

\begin{figure}[t]
   \centering
   \createTrainingPlot
   {results/traffic_signal/q_learning-cause_known/train_iqm_cumulative_reward.csv}
   {results/traffic_signal/q_learning-cause_unknown/train_iqm_cumulative_reward.csv}
   {results/traffic_signal/tcirl-sat/train_iqm_cumulative_reward.csv}
   {results/traffic_signal/dqn-cause_unknown_4/train_iqm_cumulative_reward.csv}
   {results/traffic_signal/dqn-cause_unknown_3/train_iqm_cumulative_reward.csv}
   {Total Reward per Episode}
   {0}
   {50}
   {south east}
   \caption{Traffic signal: training reward.}
   \label{fig:ts_reward}
\end{figure}

\begin{figure}[t!]
   \centering
   \createTrajectoryPlot
   {results/traffic_signal/q_learning-cause_known/trajectories_total_queue.csv}
   {results/traffic_signal/q_learning-cause_unknown/trajectories_total_queue.csv}
   {results/traffic_signal/tcirl-sat/trajectories_total_queue.csv}
   {results/traffic_signal/dqn-cause_unknown_4/trajectories_total_queue.csv}
   {results/traffic_signal/dqn-cause_unknown_3/trajectories_total_queue.csv}
   {Total Queue}
   {0}
   {24}
   \caption{Traffic signal: total queue level during evaluation.}
   \label{fig:ts_queue}
\end{figure}

\subsection{Case Study 2}
\label{sec:case_study_2}
For the second case study, we modify the traffic signal environment found in prior work~\cite{pmlr-v288-partovi-aria25a}.
The agent manages a three-intersection road where traffic enters from the leftmost side and flows through successive queues $\queue{0}, \queue{1}, \queue{2} \in \{0, \dots, 8\}$ (if $\queue{i} = k$, there are $k$ vehicles waiting in the $i$th queue).
Exactly one intersection is red at any time; the agent uses the hold action to maintain the red light's position and the advance action to shift it to the right with at least one hold between advances.
The reward $r = 1 - \frac{\max_i \queue{i}}{9}$ incentivizes the agent to prevent any single queue from becoming a bottleneck.

The cause formula triggers a phase shift after three consecutive hold-advance cycles ($h, a, h, a, h, a$).
In Phase~1, departures from each intersection follow ${\mathrm{Geom} (0.7)}$ and new arrivals are ${\mathrm{Bernoulli} (0.5)}$.
In Phase~2, the dynamics become reversed: arrivals now follow ${\mathrm{Geom} (0.4)}$ and departures are ${\mathrm{Bernoulli} (0.5)}$, increasing congestion.
The rejection marker $\markerR$ fires when $2$ or more cars depart from $\queue{2}$ in a single step, which is only possible under the geometric departure distribution of Phase~1.
Conversely, the acceptance marker $\markerA$ fires when $2$ or more cars arrive at $\queue{0}$ in a single step, which is only possible under the geometric arrival distribution of Phase~2.

Intuitively, these dynamics model how synchronized light cycles make a specific route through the intersections excessively attractive to outside traffic.
As this road becomes the optimal route, navigation algorithms redirect city traffic onto it, eventually overwhelming the system and causing congestion.
Unlike the genetic therapy case, the optimal policy here must avoid transitioning to Phase 2, where queues increase at a faster rate.

Figure~\ref{fig:ts_reward} shows the cumulative reward per training episode, indicating that TCIRL's convergence speed closely approaches the Q-learning baseline with known cause.
In contrast, Q-learning without a causal mechanism converges to a significantly lower reward, reflecting the performance gap between policies that capture the hidden phase structure and those that do not.
While the impact of the history buffer length on DQN is less pronounced here than in the genetic therapy scenario, a clear difference in the efficacy of the learned policies remains.
These results are further illustrated by the evaluation metrics in Figure~\ref{fig:ts_queue}: both TCIRL and Q-learning with known cause maintain consistently lower vehicle counts, whereas the other approaches settle into higher queue levels.

\subsection{Cause Activation Analysis}
\label{sec:cause_activation}

\begin{figure}[t]
   \centering
   \createAcceptGroupPlot
   {results/gene_editing/accept_state.csv}
   {(a) Genetic therapy.}
   {results/traffic_signal/accept_state.csv}
   {(b) Traffic signal.}
   {Trajectories in second phase (\%)}
   {100}
   \caption{Evaluation trajectories reaching Phase 2.}
   \label{fig:accept}
\end{figure}

We investigate the transition between the two phases by analyzing the fraction of evaluation trajectories that trigger the cause formula in Figure~\ref{fig:accept}.
For the genetic therapy scenario in Case Study~1, where cause activation is required for the optimal cured state, TCIRL triggers the cause in all evaluation episodes, matching the Q-learning with known cause.
On the other hand, the traffic control environment penalizes the transition, and we observe an inverse relationship between the reward achieved by the learned policy and the fraction of trajectories that end in the second phase.

TCIRL infers the underlying causal structure through a process of iterative re-inferences.
As shown in Figure~\ref{fig:reinferences}, the cumulative number of these re-inferences plateaus early in both domains, reaching a stable state long before the training budget is exhausted.
This early stabilization indicates that the agent typically identifies the correct hypothesis in the initial stages of training (within the first $10\%$ of training steps in our case studies).

\begin{figure}[t]
   \centering
   \createInferenceComparisonPlot
   {results/gene_editing/tcirl-sat/inference_iqm_reinferences.csv}
   {results/traffic_signal/tcirl-sat/inference_iqm_reinferences.csv}
   {Re-inferences}
   {0}
   {5.5}
   \caption{TCIRL cumulative re-inferences during training.}
   \label{fig:reinferences}
\end{figure}

\section{RELATED WORK}
\label{sec:related_work}

\paragraph{NMDPs, regular decision processes, and latent-mode models}
Non-Markovian decision processes (NMDPs) generalize MDPs by allowing rewards and dynamics to depend on the trajectory history; Bacchus et al.~\cite{bacchus_nmdp1997} show that temporal logic specifications of this dependence can be compiled into auxiliary state variables that restore the Markov property.
Brafman and De Giacomo~\cite{brafman_rdp2019} formalize regular decision processes (RDPs), where both rewards and dynamics may depend on regular properties of the history.
Our setting is a structured special case: a single temporal condition over labels induces a permanent switch between two Markovian transition kernels, while the reward remains Markovian.
These approaches assume the temporal specification is known; TCIRL must infer it from data.
The closest prior learning result is Abadi and Brafman~\cite{abadi_rdp2020}, whose model-based method clusters sampled histories by empirical next-state distributions, learns a Mealy machine identifying those clusters, and plans with MCTS using the estimated per-cluster dynamics.
The learned automaton is thus part of a predictive generative model, not just a monitor for which transition regime is active.
TCIRL instead learns only the finite-memory partition needed for control, namely whether the label history is before or after the hidden cause event, and leaves value estimation to model-free Q-learning.
This narrower target enables an exact SAT-based inference step and almost-sure recovery and convergence guarantees not provided in the RDP learning work.
General POMDP methods and hidden-mode or hidden-parameter MDPs also model latent variables that affect dynamics~\cite{smallwood_sondik1973, kaelbling_pomdp1998, choi_hidden_mode2001, doshi_velez_hip_mdp2016}, but they assume a latent state space to track rather than inferring a regular-language monitor from trajectory labels.

\paragraph{Reward machines and automaton-based RL}
Reward machines~\cite{reward_machines_icarte} expose non-Markovian reward structure as a finite automaton and support counterfactual updates across automaton states.
JIRP~\cite{jirp} jointly infers reward machines and policies in a SAT-based counterexample loop, and Xu et al.~\cite{xu_active_fra2021} accelerate that loop with active learning.
Our work is closest to this line because it also alternates model-free RL with automaton refinement.
The key difference is that our automaton governs transition dynamics rather than rewards.
As a result, counterexamples are not read directly from reward traces, but certified through stochastic phase-exclusive markers.
The reward machine inferred by JIRP, even if that algorithm converged in our setting, could not serve as a cause monitor, because rewards do not track the phase transition directly.
Moreover, we do not implement counterfactual updates in TCIRL, because in our setting, changing the automaton state can change the transition kernel, biasing the Bellman target.
A restricted form of this optimization could be implemented by only considering counterfactual automaton states which give the same acceptance decision.

\paragraph{Temporal logic and automata inference}
$\ltlf$ provides the finite-trace temporal logic foundation we use~\cite{degiacomo_vardi2013}.
In RL and control, temporal-logic specifications are usually compiled into product MDPs when the specification is known in advance, either as a control objective or as a reward description~\cite{sadigh_specifications2014, hasanbeig_ltl_rl2019, bacchus_non_markov1996, camacho_non_markov2017}.
TCIRL uses temporal logic differently: the relevant specification is hidden and must be inferred from data.
At the automata-learning level, Heule and Verwer~\cite{heule_verwer2010} identify DFAs exactly from accepted and rejected words.
Here the supervision is prefix-level but partial: each sampled trace can certify an initial block of rejecting prefixes and, later in the same trace, one or more accepting prefixes, leaving at most an uncertified band between them.
The SAT step therefore receives richer prefix-level constraints than standard passive DFA learning, though still less complete information than settings where an output is observed at every prefix.

\paragraph{Causal reasoning in RL}
Causal RL aims to leverage causal structure for more sample-efficient learning~\cite{zeng_causal_rl_survey2025, pearl_causality2009, lu_causal_rl2018}.
Our use of ``causal'' is narrower: we refer to a temporal pattern that changes the transition dynamics, not to an interventionist causal model in the sense of Pearl~\cite{pearl_causality2009}.
The closest work is STL-CIRL~\cite{pmlr-v288-partovi-aria25a}, which uses notions from Actual Causality~\cite{actual_causality} to mine causal Signal Temporal Logic specifications via counterexample-guided refinement and directly inspired our benchmark environments, even though we substantially changed their semantics.
TCIRL differs by using $\ltlf$ over finite traces with DFA-based monitoring, focusing on hidden transition dynamics rather than reward structure, and providing convergence guarantees under the marker assumption.

\section{CONCLUSION}
\label{sec:conclusion}

We presented TCIRL, a framework for jointly learning policies and inferring hidden cause DFAs in environments with sparse stochastic effect signals.
Two atomic marker propositions convert noisy observations into certified counterexample information for a SAT-based cause DFA synthesis loop.
Under a marker soundness and completeness assumption, the hypothesis converges almost surely to the correct cause language, and the agent converges to an optimal policy.

On two domains (a genetic therapy gridworld with a 5-state cause and beneficial trigger, and a traffic signal pipeline with a 7-state cause and harmful trigger), TCIRL recovers the true automaton and matches the full-information baseline.

\paragraph*{Limitations and future work}
The marker assumption requires domain expertise to identify phase-exclusive observable events.
The structural constraints model one-shot causation but not cyclic patterns.
Promising directions include automatic marker discovery and an extension to cause automata which can model general regular temporal patterns.

\addtolength{\textheight}{-12cm}

\bibliographystyle{IEEEtran}
\bibliography{references}

\begin{thebibliography}{10}
\providecommand{\url}[1]{#1}
\csname url@rmstyle\endcsname
\providecommand{\newblock}{\relax}
\providecommand{\bibinfo}[2]{#2}
\providecommand\BIBentrySTDinterwordspacing{\spaceskip=0pt\relax}
\providecommand\BIBentryALTinterwordstretchfactor{4}
\providecommand\BIBentryALTinterwordspacing{\spaceskip=\fontdimen2\font plus
\BIBentryALTinterwordstretchfactor\fontdimen3\font minus
  \fontdimen4\font\relax}
\providecommand\BIBforeignlanguage[2]{{%
\expandafter\ifx\csname l@#1\endcsname\relax
\typeout{** WARNING: IEEEtran.bst: No hyphenation pattern has been}%
\typeout{** loaded for the language `#1'. Using the pattern for}%
\typeout{** the default language instead.}%
\else
\language=\csname l@#1\endcsname
\fi
#2}}

\bibitem{sutton_barto2018}
R.~S. Sutton and A.~G. Barto, \emph{Reinforcement learning - an introduction,
  2nd Edition}.\hskip 1em plus 0.5em minus 0.4em\relax {MIT} Press, 2018.

\bibitem{bacchus_nmdp1997}
F.~Bacchus, C.~Boutilier, and A.~J. Grove, ``Structured solution methods for
  non-{M}arkovian decision processes,'' in \emph{Proceedings of the Fourteenth
  National Conference on Artificial Intelligence (AAAI) 1997}.\hskip 1em plus
  0.5em minus 0.4em\relax {AAAI} Press / The {MIT} Press, 1997, pp. 112--117.

\bibitem{brafman_rdp2019}
R.~I. Brafman and G.~D. Giacomo, ``Regular decision processes: {A} model for
  non-{M}arkovian domains,'' in \emph{Proceedings of the Twenty-Eighth
  International Joint Conference on Artificial Intelligence ({IJCAI})
  2019}.\hskip 1em plus 0.5em minus 0.4em\relax ijcai.org, 2019, pp.
  5516--5522.

\bibitem{littman_memoryless1994}
M.~L. Littman, ``Memoryless policies: theoretical limitations and practical
  results,'' in \emph{Proceedings of the Third International Conference on
  Simulation of Adaptive Behavior: From Animals to Animats 3}, ser.
  SAB94.\hskip 1em plus 0.5em minus 0.4em\relax MIT Press, 1994, pp. 238--245.

\bibitem{watkins_dayan1992}
C.~J. C.~H. Watkins and P.~Dayan, ``Technical note q-learning,'' \emph{Mach.
  Learn.}, vol.~8, pp. 279--292, 1992.

\bibitem{degiacomo_vardi2013}
G.~D. Giacomo and M.~Y. Vardi, ``Linear temporal logic and linear dynamic logic
  on finite traces,'' in \emph{Proceedings of the 23rd International Joint
  Conference on Artificial Intelligence ({IJCAI}) 2013}.\hskip 1em plus 0.5em
  minus 0.4em\relax {IJCAI/AAAI}, 2013, pp. 854--860.

\bibitem{jirp}
Z.~Xu, I.~Gavran, Y.~Ahmad, R.~Majumdar, D.~Neider, U.~Topcu, and B.~Wu,
  ``Joint inference of reward machines and policies for reinforcement
  learning,'' in \emph{Proceedings of the Thirtieth International Conference on
  Automated Planning and Scheduling ({ICAPS}) 2020}.\hskip 1em plus 0.5em minus
  0.4em\relax {AAAI} Press, 2020, pp. 590--598.

\bibitem{mnih_dqn2015}
V.~Mnih, K.~Kavukcuoglu, \emph{et~al.}, ``Human-level control through deep
  reinforcement learning,'' \emph{Nat.}, vol. 518, no. 7540, pp. 529--533,
  2015.

\bibitem{pmlr-v288-partovi-aria25a}
H.~P. Aria and Z.~Xu, ``Mining causal signal temporal logic formulas for
  efficient reinforcement learning with temporally extended tasks,'' in
  \emph{International Conference on Neuro-symbolic Systems ({NeSy}) 2025}, ser.
  Proceedings of Machine Learning Research.\hskip 1em plus 0.5em minus
  0.4em\relax {PMLR}, 2025, pp. 524--542.

\bibitem{abadi_rdp2020}
E.~Abadi and R.~I. Brafman, ``Learning and solving regular decision
  processes,'' in \emph{Proceedings of the Twenty-Ninth International Joint
  Conference on Artificial Intelligence ({IJCAI}) 2020}.\hskip 1em plus 0.5em
  minus 0.4em\relax ijcai.org, 2020, pp. 1948--1954.

\bibitem{smallwood_sondik1973}
R.~D. Smallwood and E.~J. Sondik, ``The optimal control of partially observable
  markov processes over a finite horizon,'' \emph{Oper. Res.}, vol.~21, no.~5,
  pp. 1071--1088, 1973.

\bibitem{kaelbling_pomdp1998}
L.~P. Kaelbling, M.~L. Littman, and A.~R. Cassandra, ``Planning and acting in
  partially observable stochastic domains,'' \emph{Artif. Intell.}, vol. 101,
  no. 1-2, pp. 99--134, 1998.

\bibitem{choi_hidden_mode2001}
S.~P.~M. Choi, D.~Yeung, and N.~L. Zhang, \emph{Hidden-Mode Markov Decision
  Processes for Nonstationary Sequential Decision Making}, ser. Lecture Notes
  in Computer Science.\hskip 1em plus 0.5em minus 0.4em\relax Springer, 2001,
  pp. 264--287.

\bibitem{doshi_velez_hip_mdp2016}
F.~Doshi{-}Velez and G.~D. Konidaris, ``Hidden parameter markov decision
  processes: {A} semiparametric regression approach for discovering latent task
  parametrizations,'' in \emph{Proceedings of the Twenty-Fifth International
  Joint Conference on Artificial Intelligence ({IJCAI}) 2016}.\hskip 1em plus
  0.5em minus 0.4em\relax {IJCAI/AAAI} Press, 2016, pp. 1432--1440.

\bibitem{reward_machines_icarte}
R.~T. Icarte, T.~Q. Klassen, \emph{et~al.}, ``Reward machines: Exploiting
  reward function structure in reinforcement learning,'' \emph{J. Artif.
  Intell. Res.}, vol.~73, pp. 173--208, 2022.

\bibitem{xu_active_fra2021}
Z.~Xu, B.~Wu, \emph{et~al.}, ``Active finite reward automaton inference and
  reinforcement learning using queries and counterexamples,'' in
  \emph{Proceedings of the 20th International Conference on Correct Hardware
  Design and Verification Methods ({CD-MAKE}) 2021}.\hskip 1em plus 0.5em minus
  0.4em\relax Springer, 2021, pp. 115--135.

\bibitem{sadigh_specifications2014}
D.~Sadigh, E.~S. Kim, \emph{et~al.}, ``A learning based approach to control
  synthesis of markov decision processes for linear temporal logic
  specifications,'' in \emph{53rd {IEEE} Conference on Decision and Control
  ({CDC}) 2014}.\hskip 1em plus 0.5em minus 0.4em\relax {IEEE}, 2014, pp.
  1091--1096.

\bibitem{hasanbeig_ltl_rl2019}
M.~Hasanbeig, Y.~Kantaros, \emph{et~al.}, ``Reinforcement learning for temporal
  logic control synthesis with probabilistic satisfaction guarantees,'' in
  \emph{58th {IEEE} Conference on Decision and Control ({CDC}) 2019}.\hskip 1em
  plus 0.5em minus 0.4em\relax {IEEE}, 2019, pp. 5338--5343.

\bibitem{bacchus_non_markov1996}
F.~Bacchus, C.~Boutilier, and A.~J. Grove, ``Rewarding behaviors,'' in
  \emph{Proceedings of the Thirteenth National Conference on Artificial
  Intelligence (AAAI) 1996}.\hskip 1em plus 0.5em minus 0.4em\relax {AAAI}
  Press / The {MIT} Press, 1996, pp. 1160--1167.

\bibitem{camacho_non_markov2017}
A.~Camacho, O.~Chen, \emph{et~al.}, ``Non-{M}arkovian rewards expressed in
  {LTL:} {G}uiding {S}earch {V}ia {R}eward {S}haping,'' in \emph{Proceedings of
  the Seventh Annual Symposium on Combinatorial Search ({SOCS}) 2017}.\hskip
  1em plus 0.5em minus 0.4em\relax {AAAI} Press, 2017, pp. 159--160.

\bibitem{heule_verwer2010}
M.~Heule and S.~Verwer, ``Exact {DFA} identification using {SAT} solvers,'' in
  \emph{Grammatical Inference: Theoretical Results and Applications, 10th
  International Colloquium ({ICGI}) 2010}, ser. Lecture Notes in Computer
  Science.\hskip 1em plus 0.5em minus 0.4em\relax Springer, 2010, pp. 66--79.

\bibitem{zeng_causal_rl_survey2025}
Y.~Zeng, R.~Cai, \emph{et~al.}, ``A survey on causal reinforcement learning,''
  \emph{{IEEE} Trans. Neural Networks Learn. Syst.}, vol.~36, no.~4, pp.
  5942--5962, 2025.

\bibitem{pearl_causality2009}
J.~Pearl, \emph{Causality: Models, Reasoning and Inference}, 2nd~ed.\hskip 1em
  plus 0.5em minus 0.4em\relax Cambridge University Press, 2009.

\bibitem{lu_causal_rl2018}
\BIBentryALTinterwordspacing
C.~Lu, B.~Sch{\"{o}}lkopf, and J.~M. Hern{\'{a}}ndez{-}Lobato, ``Deconfounding
  reinforcement learning in observational settings,'' \emph{CoRR}, vol.
  abs/1812.10576, 2018. [Online]. Available:
  \url{http://arxiv.org/abs/1812.10576}
\BIBentrySTDinterwordspacing

\bibitem{actual_causality}
\BIBentryALTinterwordspacing
J.~Y. Halpern, \emph{Actual Causality}.\hskip 1em plus 0.5em minus 0.4em\relax
  The MIT Press, 08 2016. [Online]. Available:
  \url{https://doi.org/10.7551/mitpress/10809.001.0001}
\BIBentrySTDinterwordspacing

\end{thebibliography}
\end{document}